\documentclass[conference]{cls/IEEEtran}
\IEEEoverridecommandlockouts

\usepackage{cite}
\usepackage{hyperref}
\usepackage{amsmath,amssymb,amsfonts}
\usepackage{graphicx}
\usepackage{textcomp}
\usepackage{xcolor}
\def\BibTeX{{\rm B\kern-.05em{\sc i\kern-.025em b}\kern-.08em
    T\kern-.1667em\lower.7ex\hbox{E}\kern-.125emX}}

\usepackage{bbm}
\usepackage{mathtools}
\usepackage{booktabs} 
\usepackage{multirow}

\newtheorem{definition}{Definition}
\newtheorem{corollary}{Corollary}
\newtheorem{lemma}{Lemma}
\newtheorem{assumption}{Assumption}
\newtheorem{theorem}{Theorem}
\newtheorem{proof}{Proof}
\newtheorem{remark}{Remark}
\newtheorem{proposition}{Proposition}

\usepackage{algorithm}
\usepackage{algpseudocode}

\newcommand{\calC}{{\cal C}}
\newcommand{\calD}{{\cal D}}

\newcommand{\calF}{{\cal F}}
\newcommand{\calG}{{\cal G}}

\newcommand{\calO}{{\cal O}}
\newcommand{\calP}{{\cal P}}

\newcommand{\calR}{{\cal R}}

\newcommand{\calT}{{\cal T}}

\newcommand{\calV}{{\cal V}}

\newcommand{\calX}{{\cal X}}

\newcommand{\bfb}{\mathbf{b}}

\newcommand{\bfh}{\mathbf{h}}

\newcommand{\bfx}{\mathbf{x}}

\newcommand{\bfalpha}{\boldsymbol{\alpha}}

\newcommand{\bftau}{\boldsymbol{\tau}}

\newcommand{\bbR}{\mathbb{R}}

\newcommand{\olive}[1]{\textcolor{olive}{#1}}
\newcommand{\red}[1] {\textcolor{red}{#1}}

\title{Draining Fictitious Knots: Restoring Distance-Awareness Guarantees for High-Dimensional Spline Networks
\thanks{This work is supported by the National Science Foundation under Grant No. 2218063 and the Gleason Endowment at RIT.}
}

\author{\IEEEauthorblockN{Masoud Ataei}
\IEEEauthorblockA{\textit{Electrical and Computer Engg.} \\
\textit{University of Maine}\\
Orono, ME, USA \\
masoud.ataei@maine.edu}
\and
\IEEEauthorblockN{Mohammad Javad Khojasteh}
\IEEEauthorblockA{\textit{Electrical and Microelectronic Engg.} \\
\textit{Rochester Institute of Technology}\\
Rochester, NY, USA \\
mjkeme@rit.edu}
\and
\IEEEauthorblockN{Vikas Dhiman}
\IEEEauthorblockA{\textit{Electrical and Computer Engg.} \\
\textit{University of Maine}\\
Orono, ME, USA \\
vikas.dhiman@maine.edu}
}
\begin{document}

\maketitle
\begin{abstract}
    Kolmogorov-Arnold Networks (KANs) with spline activations have recently shown promise for interpretable function approximation.  Distance-Aware Error for Kolmogorov Networks (DAREK) introduces a computationally efficient bottom-up approach to uncertainty quantification by equipping KANs with distance-aware error bounds; yet, in high-dimensional settings, the theoretical guarantees can be weakened by the emergence of fictitious knots.
    Inspired by the Kolmogorov-Arnold representation theorem, DAREK adopts a componentwise formulation in which each input dimension is treated separately; as a result, induced knot locations may appear in the combined input space without corresponding to actual training data.
    These fictitious knots mislead the DAREK uncertainty estimator into reporting low uncertainty far from any real observation, violating the distance-awareness guarantee. 
    We identify this failure mode precisely, characterize its geometric structure, and propose a drainage uncertainty mechanism that restores distance-awareness by constructing a monotonically decreasing uncertainty path from any fictitious knot region toward the nearest real knot. 
    The proposed drainage method provides a practical heuristic correction that mitigates the fictitious-knot failure mode while restoring theoretical distance-awareness in high-dimensional settings.
    Experiments on a 2D synthetic benchmark and a 100-dimensional face dataset show that drainage raises sampled
    distance-awareness (SDA) from ~85\% to 98-99\%, matching Gaussian processes at lower computational cost.
\end{abstract}

\begin{IEEEkeywords}
Uncertainty quantification, neural networks,  spline neural networks, Kolmogorov-Arnold networks (KAN).
\end{IEEEkeywords}

\section{Introduction}

Splines are smooth piecewise polynomials that have long been used for function approximation in control and signal processing~\cite{unser1999splines,egerstedt2009control}.
They have also been studied as adaptive activation functions in neural networks~\cite{guarnieri1999multilayer,igelnik2003kolmogorov,bohra2020learning}. Their recent use in Kolmogorov-Arnold Networks (KANs)~\cite{liu2025kan} has renewed interest in spline-based neural architectures due to their expressivity and interpretability. Beyond these properties, distance-aware uncertainty is crucial in safety-critical applications such as autonomous navigation, where learned models should behave conservatively away from the training distribution~\cite{ataei2025darek,liu2020simple}.
Fig.~\ref{fig:toy-example} shows a toy example where a standard classifier is uncertain only near its decision boundary ---whether
linear (left) or closed and nonlinear (middle)--- and remains confident (low uncertainty) elsewhere, including far from the \emph{training} data, causing a distant test point (star) to be labeled confidently.
A distance-aware model (right) instead reports high uncertainty away from the training points, correctly flagging the same distant test point as unreliable.

\begin{figure*}[t]
  \centering  \includegraphics[width=0.72\textwidth]{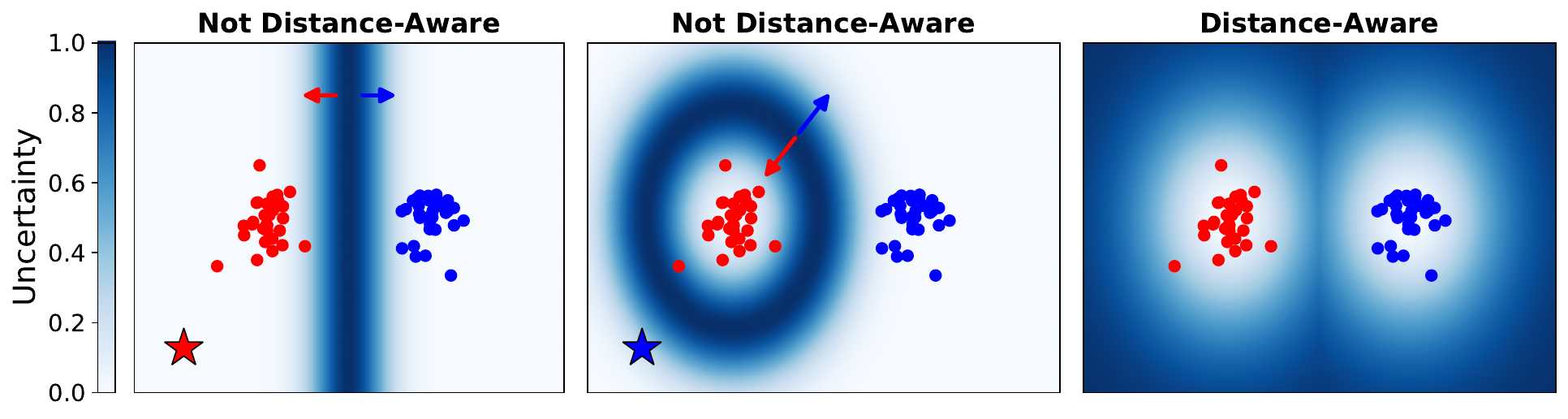}
  \caption{Toy binary classification.
  \textbf{Left)} and \textbf{middle)} Non-distance-aware models are confident (low uncertainty) far from the data. Models are uncertain only near the decision boundary. A linear separator (left) and a closed nonlinear one (middle). 
  Consequently, the distant query (star) lands in a confident region despite being far from training data.
  \textbf{Right)}, a distance-aware model raises uncertainty away from the training clusters.
  }
  \label{fig:toy-example}
  \vspace{-3mm}
\end{figure*}

Distance-aware uncertainty has traditionally been provided by Gaussian processes (GPs)~\cite{rasmussen2006gaussian} with kernels such as radial basis function (RBF), but their computational complexity grows cubically in the number of training samples.
Deep ensembles~\cite{lakshminarayanan2017simple} and Monte Carlo dropout (MC-D)~\cite{gal2016dropout} are cheaper, stochastic uncertainty estimators, but are not reliably distance-aware far from training data~\cite{liu2020simple}.  
Deterministic single-forward-pass methods such as SNGP~\cite{liu2020simple} and DUE~\cite{van2021feature} recover distance-awareness by pairing a distance-preserving feature extractor, enforced through spectral normalization or a bi-Lipschitz constraint, with a Gaussian-process output layer. However, these methods require architectural modification and end-to-end training, and their distance-awareness holds with respect to distances measured in the \emph{learned feature space}; DAREK, by contrast, defines distance-awareness directly in the input space.  
Verification methods such as CROWN~\cite{zhang2018efficient} instead bound the output of a fixed neural network over a prescribed input perturbation set using convex relaxations; they address robustness of the learned model, whereas DAREK bounds approximation error relative to the unknown target as a function of distance from observed data.
Recently, DAREK~\cite{ataei2025darek} introduced a deterministic worst-case uncertainty framework for spline neural networks (SNNs), providing an efficient, sampling-free alternative that bounds each spline's error and propagates the resulting uncertainty through the network composition.
Moreover, K-DAREK extends DAREK by applying distance-aware error bounds to Kurkova-Kolmogorov networks~\cite{kuurkova1992kolmogorov,toscano2025kkans}, combining MLP components with spline-based structures to improve efficiency and stability while leveraging their expressive power~\cite{ataei2025kdarek}.

Inspired by the Kolmogorov-Arnold representation theorem~\cite{schmidt2021kolmogorov,morris2021hilbert}, DAREK formulates distance-awareness componentwise, treating each input coordinate through separate univariate spline bounds. 
Although each component-level bound is distance-aware, its bottom-up propagation through summation and composition need not remain distance-aware in the full input space. This creates a geometric failure mode: coordinate-wise knots induce a grid of apparent knot locations, most of which are not actual training samples. We call these artificial locations \emph{fictitious knots}. Because the propagated uncertainty can vanish at fictitious knots just as it does at real knots, DAREK may assign low uncertainty to points that are far from all training observations, thereby weakening its high-dimensional distance-awareness guarantee.

In this paper, we characterize the geometry of fictitious knots and introduce a drainage-based uncertainty mechanism to restore distance-awareness by linking fictitious-knot regions to nearby real training knots.
Specifically, we 
\begin{itemize}
    \item identify and formalize the \textit{fictitious knot} problem in high-dimensional KANs, showing how componentwise spline processing can break joint distance-awareness;
    \item propose drainage uncertainty, which restores a monotone uncertainty path from fictitious knot regions to the nearest real knot;
    \item bound the drainage-region volume and characterize its dependence on drainage thickness and input dimension;
    \item validate the estimator on a 2D synthetic benchmark and a 100-dimensional task, raising sampled distance-awareness from roughly 85\% to 98--99\% and matching a GP at lower computational cost; and
    \item release the implementation and experiment code~\footnote{\url{https://github.com/Masoud-Ataei/Fictitious-Knots}}.    
\end{itemize}

Drainage provides a practical heuristic fix for fictitious knots, restoring theoretical distance-awareness while leaving the development of joint high-dimensional uncertainty bounds beyond componentwise methods for future work.

\section{Background}
In this section, we review the distance-awareness condition and the metric used to evaluate it, introduce spline neural networks, and summarize the worst-case error bounds underlying DAREK~\cite{ataei2026darek}.
{We use the notation $m$ for the number of knots, $k$ for the order of the spline, and $n$ for the input dimension.}
Let $f : [a, b] \to \bbR$ be a scalar function, $\calT = \{\tau_1, \dots, \tau_m\}$ an ordered set of $m$ distinct knots, and $\calD_f (\calT) = \{(\tau_i, f(\tau_i))\}_{i=1}^m$ the values observed at those knots.
Let the input space $\calX \subset \bbR^n$ and the finite set of training inputs $\calX_{\calD} \subset \calX$.
Knots are selected from the training inputs, so $\calT \subset \calX_{\calD}$.
A piecewise polynomial $\hat f$ of order $k$ over knots $\calT$ is a continuous function whose restriction to each interval $[\tau_j,\tau_{j+1})$ is a polynomial of order-$k$; we write {$\calP_{k,j}[\calD_{\hat f}(\calT)]$} for the $j$-th piece. 
We denote the $k$th derivative of a function $f$ by $f^{(k)}$.

\subsection{Distance-awareness}
A distance-aware uncertainty estimator is a model that reports lower confidence for query points that lie farther from the training data, as defined mathematically below. 
\begin{definition}[Input distance-awareness]
    \label{def:distance_awareness}
    Let $\hat{y} = \hat{f}(\bfx)$ approximate a target $y = f(\bfx)$ from inputs $x \in \calX_\calD$, and let $u_{\hat{f}}(\bfx) : \calX \to \bbR^+$
    be an uncertainty estimator that bounds the true values in the interval $y \in [\hat{f}(\bfx)-u_{\hat{f}}(\bfx), \hat{f}(\bfx)+u_{\hat{f}}(\bfx)]$
    for every $\bfx \in \calX$.
    Then $u_{\hat{f}}(\bfx)$ is \textbf{distance-aware} if it increases monotonically with the test point's distance from the training data under some distance function $d(\bfx, \calX_\calD)$.
\end{definition}

We use a specific distance function, which measures distance to the single closest training point (or knot) $\tau^*$ with a set distance $d(., \calX_\calD)$ defined as
\begin{align}
    d(\bfx, \calX_\calD) = \min_{\tau \in \calT} d_u(\bfx, \tau) = d_u(\bfx, \tau^*).
\end{align}
We call $d_u(\bfx, \tau^*)$ the \emph{inducing distance} of the uncertainty estimator $u_{\hat{f}}$.
It need not be Euclidean; it can be a geodesic distance on the data manifold. 
When both the uncertainty estimator and the inducing distance are differentiable, distance-awareness can be written as
\begin{align}
    [\nabla_\bfx u_{\hat{f}}(\bfx)]^\top \nabla_\bfx d_u(\bfx, \tau^*) \ge 0 \quad \forall \bfx \in \calX,
    \label{eq:dist-aware-cond}
\end{align}
meaning that uncertainty increases with increasing distance.

Checking~\eqref{eq:dist-aware-cond} at every point over $\calX$ is intractable, so we estimate it from samples, fixing the inducing distance to be Euclidean for comparability across estimators, as follows.

\begin{definition}[Sampled Distance-Awareness (SDA)]
    \label{def:sampled-distance-awareness}
    For a test point $\bfx_t$ drawn uniformly from $\calX_{\text{test}}$, {SDA is defined as}
    \begin{align}
        \label{eq:probability_distance_aware}
        \text{SDA} = \frac{1}{N}\sum_{\bfx_t \sim \calX_{\text{test}}}
        \mathbbm{1}
        \left[
        \nabla_{\bfx} u_{\hat{f}}(\bfx_t)^\top (\bfx_t - \bftau^*) \ge 0 \right],
    \end{align}
\end{definition}%
where $\mathbbm{1}[.]$ is the indicator function.
SDA measures the fraction of sampled directions along which uncertainty correctly rises away from the nearest knot.

\subsection{Error of a spline network}
Here we review the spline network we use (KANs) and the worst-case error bounds of the DAREK approach that our correction builds on.

\textbf{Spline:}
A spline of order $k$ is a piecewise polynomial of the same order that is $\calC^{(k-1)}$ times continuous.

\textbf{Spline networks:}
In KAN, the scalar weights of an MLP are replaced with learnable univariate spline activations. 
A two-layer KAN mapping $\bbR^{n_1}\to\bbR$ is $\mathrm{KAN}_2(\bfx)=\sum_{i=1}^{n_2}\phi_{2,1,i} \big(\sum_{j=1}^{n_1}\phi_{1,i,j}(x_j)\big)$, where each $\phi_{l,i,j}$ is a $k$th-order spline that connects input $j$ to output $i$ in layer $l$.
Each spline in the KAN implementation is constructed as $\phi_{l,i,j}(x_j)=\bfalpha_{l,i,j}^\top \bfb_{k,\tau_{l,j}}(x_j)$, where $\bfb_{k,\tau_{l,j}}$ are B-spline bases.
$l$th layer of KAN can be written as $\bfh_l(\bfx) = (\sum_{j=1}^{n_1}\phi_{l,i,j}(x_j))_{i=1}^{n_l}$ and an $L$-layer KAN is the composition
\begin{equation}\label{eq:kan_comp}
\mathrm{KAN}_L(\bfx)=\bfh_L\circ\cdots\circ \bfh_1(\bfx),
\qquad \bfh_l:\bbR^{n_l}\to\bbR^{n_{l+1}}.
\end{equation}
Choosing the knots as a subset of the training inputs, $\calT \subset \calX_{\calD}$, as proposed in DAREK~\cite{ataei2025darek}, fixes the basis functions by the data, so only the coefficients $\alpha_{l,i,j}$ are learned, making the KAN a \textit{semi-parametric} model.
Here $n_l$ is the dimension of layer $l$ and $n_1=n$ is the input dimension.

\textbf{DAREK}~\cite{ataei2025darek} computes a worst-case error bound for an SNN at a test point, using $m$ selected training inputs as the knots of the spline network. We review the DAREK results needed here; complete derivations and proofs are available in~\cite{ataei2026darek}.
DAREK additionally makes the following Lipschitz-continuity assumption:

\begin{assumption}[$k$th-order Lipschitz continuity]\label{ass:lip}
For a $(k-1)$-times differentiable $f:[a,b]\to\bbR$, the $k$th-order
Lipschitz constant $L_f^k$ satisfies
$|f^{(k-1)}(x)-f^{(k-1)}(y)|\le L_f^k\,d(x,y)$ for all $x\neq y$, where $f^{(k)}$ is the $k$th derivative of function $f$.
\end{assumption}

\begin{theorem}[Newton interpolation error bound~\cite{ataei2025darek}]
\label{thm:poly-interp-bound}
Let $f\in C^{k+1}$ on $[a,b]$ be $(k{+}1)$th-order Lipschitz with constant
$L_f^{k+1}$. The $k$th-order Newton piecewise-polynomial fit
$\calP_{k,j}[\mathcal{D}_f(\calT)]$ through the knots satisfies, for
$x\in[\tau_j,\tau_{j+1})$,
\begin{align}
    |f(x)-\calP_{k,j}[\mathcal{D}_f](x)| &\le
    \frac{L_f^{k+1}}{(k+1)!}\Big|\textstyle\prod_{i=1}^{k+1}(x-\tau_i^{(j)})\Big| \notag\\
    & := \Bar{u}_f(x;\calT),
    \label{eq:int-error}
\end{align}
which we denote $\bar u_f(x;\calT)$ as interpolation error bound and $\tau_i^{(j)}$ are the $k{+}1$ knots close by $x$.
\end{theorem}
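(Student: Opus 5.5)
The plan is to reduce the statement to the classical Newton interpolation remainder and then replace the derivative factor using Assumption~\ref{ass:lip}. Fix $j$ and $x\in[\tau_j,\tau_{j+1})$. Let $\tau_1^{(j)},\dots,\tau_{k+1}^{(j)}$ be the $k+1$ knots used to build the piece $\calP_{k,j}[\calD_f(\calT)]$. This piece is the unique polynomial of degree at most $k$ that interpolates $f$ at those knots. I would work with the Newton form, because its remainder is exact:
\begin{align*}
f(x)-\calP_{k,j}[\calD_f](x) = f[\tau_1^{(j)},\dots,\tau_{k+1}^{(j)},x]\,\prod_{i=1}^{k+1}\big(x-\tau_i^{(j)}\big).
\end{align*}
This identity holds for any $x$ that differs from all the nodes, and at a node both sides vanish. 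So the whole task reduces to bounding the divided difference $f[\tau_1^{(j)},\dots,\tau_{k+1}^{(j)},x]$ by $L_f^{k+1}/(k+1)!$.

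For that bound I would apply the mean-value theorem for divided differences. Since $f\in C^{k+1}$, there is a point $\xi$ in the convex hull of the nodes and $x$ with
\begin{align*}
f[\tau_1^{(j)},\dots,\tau_{k+1}^{(j)},x]=\frac{f^{(k+1)}(\xi)}{(k+1)!}.
\end{align*}
Next I use $(k+1)$th-order Lipschitz continuity, which says $|f^{(k)}(s)-f^{(k)}(t)|\le L_f^{k+1}|s-t|$. Because $f^{(k)}$ is differentiable, dividing by $|s-t|$ and letting $t\to s$ gives $|f^{(k+1)}(\xi)|\le L_f^{k+1}$. Combining the two displays gives~\eqref{eq:int-error}.

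The delicate step is the mismatch between the stated order $k$ and the number of nodes. A ``$k$th-order'' piece in the spline convention has degree $k-1$, whereas $k+1$ knots determine a polynomial of degree $k$. I would adopt the reading under which the remainder has exactly $k+1$ factors: the piece has degree at most $k$ and interpolates at $k+1$ nodes. I would state this explicitly, since under the other reading the product and the factorial would each shift by one.

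A second point to check is the regularity needed. If one wanted to drop $f\in C^{k+1}$ and use only the Lipschitz condition on $f^{(k)}$, I would replace the mean-value step with the Hermite--Genocchi integral representation of the $(k+1)$th divided difference. That representation works with $f^{(k+1)}$ existing only almost everywhere, and it keeps the same bound, because the simplex has volume $1/(k+1)!$.
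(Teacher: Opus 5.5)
Your proposal is correct. The paper gives no proof of its own here (it defers to the DAREK reference), and your argument is the standard proof of this classical bound: the exact Newton remainder $f[\tau_1^{(j)},\dots,\tau_{k+1}^{(j)},x]\prod_{i=1}^{k+1}(x-\tau_i^{(j)})$, then the mean-value theorem for divided differences, then $|f^{(k+1)}|\le L_f^{k+1}$ from the Lipschitz condition on $f^{(k)}$; your reading of ``$k$th order'' as degree $k$ with $k+1$ nodes also matches the paper's conventions (cubic splines have $k=3$ and are $C^{k-1}$).
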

The proof is provided in~\cite{ataei2026darek}. We restate the result here to establish the notation used throughout the remainder of this paper.

\begin{proposition} [Distance-awareness of the interpolation-error bound~\cite{ataei2026darek}]
\label{prop:1d-da}
Let $f:[a,b]\to\bbR$ be the scalar function with knots $\calT=\{\tau_1,\dots,\tau_m\}$.
On each interval $[\tau_j,\tau_{j+1})$, the bound $\bar u_f(\cdot;\calT)$ of
\eqref{eq:int-error} vanishes at both knots, is strictly positive in the interior, and has a unique maximum $x_j^\star\in(\tau_j,\tau_{j+1})$.
It is strictly increasing on $(\tau_j,x_j^\star)$ and
strictly decreasing on $(x_j^\star,\tau_{j+1})$.
\end{proposition}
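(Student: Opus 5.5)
The plan is to work with the explicit form of $\bar u_f(\cdot;\calT)$ from Theorem~\ref{thm:poly-interp-bound} on a fixed interval $[\tau_j,\tau_{j+1})$. The first thing to pin down is that the set of $k{+}1$ nearby knots $\{\tau_i^{(j)}\}_{i=1}^{k+1}$ is determined by the interval index $j$ and does not vary with $x$ inside the interval. It also contains both endpoints $\tau_j$ and $\tau_{j+1}$, as the Newton piecewise fit on that piece interpolates at its endpoints. I will take this as part of the construction in Theorem~\ref{thm:poly-interp-bound}.

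With the knots fixed, write
\begin{align*}
p_j(x)=\prod_{i=1}^{k+1}\bigl(x-\tau_i^{(j)}\bigr),
\end{align*}
so that $\bar u_f(x;\calT)=\frac{L_f^{k+1}}{(k+1)!}\,|p_j(x)|$.

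The claims about the endpoints and the interior then follow from the structure of $p_j$. Because $\tau_j$ and $\tau_{j+1}$ are roots of $p_j$, the bound vanishes at both knots. Because the knots in $\calT$ are ordered and distinct, none of the $\tau_i^{(j)}$ lies in the open interval $(\tau_j,\tau_{j+1})$. So no factor of $p_j$ vanishes there, and $|p_j|>0$ on the interior. Moreover $p_j$ has constant sign on $(\tau_j,\tau_{j+1})$. This means $|p_j|=\pm p_j$ is smooth there and $\log|p_j|$ is well defined. I assume $L_f^{k+1}>0$; otherwise the bound is identically zero and the statement degenerates.

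For the unique maximum and the monotonicity, I will study
\begin{align*}
g(x)=\log|p_j(x)|=\sum_{i=1}^{k+1}\log\bigl|x-\tau_i^{(j)}\bigr|
\end{align*}
on $(\tau_j,\tau_{j+1})$. Its derivative is $g'(x)=\sum_i \frac{1}{x-\tau_i^{(j)}}$. Its second derivative is $g''(x)=-\sum_i \frac{1}{(x-\tau_i^{(j)})^2}<0$, so $g'$ is strictly decreasing on the interval. As $x\downarrow\tau_j$, the term $1/(x-\tau_j)\to+\infty$ while the other terms stay bounded, so $g'\to+\infty$. As $x\uparrow\tau_{j+1}$, the term $1/(x-\tau_{j+1})\to-\infty$, so $g'\to-\infty$. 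By continuity and strict monotonicity, $g'$ has exactly one zero $x_j^\star\in(\tau_j,\tau_{j+1})$. It is positive on $(\tau_j,x_j^\star)$ and negative on $(x_j^\star,\tau_{j+1})$. Since $\exp$ is strictly increasing, $|p_j|$, and hence $\bar u_f$, is strictly increasing on $(\tau_j,x_j^\star)$ and strictly decreasing on $(x_j^\star,\tau_{j+1})$. Therefore $x_j^\star$ is the unique maximizer. As a cross-check, Rolle's theorem gives the same count: $p_j'$ has degree $k$ and has one root strictly between each pair of consecutive roots of $p_j$, so exactly one of these roots lies in $(\tau_j,\tau_{j+1})$.

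The main obstacle is the bookkeeping step, not the calculus. I must confirm that the choice of the $k{+}1$ nearby knots is constant on each piece and always includes both endpoints; otherwise $\bar u_f$ could jump or fail to vanish at a knot. If the construction in~\cite{ataei2026darek} allows the knot stencil to exclude an endpoint, I would restrict the claim to stencils containing both endpoints, which is the case whenever the stencil is a set of consecutive knots around the interval.
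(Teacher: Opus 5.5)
Your proof is correct. The paper does not prove this proposition itself; it imports it from \cite{ataei2026darek}. So there is no in-paper argument to compare against, and your write-up stands on its own as complete.

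The two hypotheses you make explicit are exactly the ones the statement silently needs:
\begin{itemize}
\item $L_f^{k+1}>0$, for strict positivity in the interior;
\item a stencil $\{\tau_i^{(j)}\}$ containing both $\tau_j$ and $\tau_{j+1}$, without which the bound cannot vanish at both knots. The superscript $(j)$ in Theorem~\ref{thm:poly-interp-bound} supports your per-interval reading.
\end{itemize}
Given these, your main route is strict concavity of $\log|p_j|$ on $(\tau_j,\tau_{j+1})$, together with $(\log|p_j|)'\to\pm\infty$ at the two ends. This yields existence and uniqueness of $x_j^\star$ and both strict monotonicity claims at once. The Rolle root count you give as a cross-check only locates a single critical point, and needs a short sign argument on top to get monotonicity.

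The concavity route also buys robustness that the Rolle count lacks, and it makes your bookkeeping worry milder than you fear. Suppose $\tau_i^{(j)}$ is read as the $k{+}1$ knots nearest to $x$, so the stencil can change inside the interval.
\begin{itemize}
\item As $x$ increases, distances to knots on the left grow and distances to knots on the right shrink. So the only possible change is dropping some knot $a\le\tau_j$ in favor of some knot $b\ge\tau_{j+1}$, at the point $s$ where $s-a=b-s$.
\item At $s$, $|p|$ is continuous, because the swapped factors have equal magnitude.
\item The one-sided derivatives of $\log|p|$ at $s$ differ by $2/(s-a)>0$ (left minus right). Hence $\log|p|$ stays strictly concave across every swap.
\item Each endpoint is the nearest knot for $x$ close to it, so it lies in the stencil there, and $\log|p|\to-\infty$ at both ends.
\end{itemize}
Your conclusion therefore holds unchanged even without assuming a fixed stencil. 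The Rolle argument, which works with a single fixed polynomial, does not extend this way.
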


\textbf{Error bound with linear fit (EBL):}
In practice, the trained network does not pass exactly through the knots, so there is a residual $e^f_j(x):=f(x)-\hat f_{[j]}(x)$ whose values at the knots, $\calD_{e^f_j}(\tau_{1:m})$, are known. 
We account for this residual by adding a piecewise-linear interpolation of the absolute knot errors, $|\calP_{1,j}[\calD_{e_j^f}](x)|=[\tau_{j}, \tau_{j+1}]|e_j^f| (x-\tau_j) + |e_j^f|$, where the slope is the divided difference $[\tau_{j}, \tau_{j+1}]|e_j^f|= ( |e_{j+1}^f| - |e_j^f| ) / (\tau_{j+1} - \tau_j)$ and refer to this approach as \textit{error bound with linear fit} (EBL).
\begin{align}
    \label{eq:error-at-knots-ebl}
    u_f(x; \bftau_{1:m}) \coloneqq &\bar{u}_f(x; \bftau_{1:m}) + |\calP_{1,j} [\calD_{e_{j}^f}](x)| \quad \text{(EBL)}.
\end{align}

\section{Problem Statement}
\label{sec:problem}
In this section, we formalize the failure mode that motivates the paper.
We first define real and fictitious knots (Sec.~\ref{sec:real-vs-fictitious}), then demonstrate that the bottom-up DAREK bound vanishes at every fictitious knot, and, as a consequence, need not remain distance-aware once the input dimension exceeds one (Sec.~\ref{sec:violation-fic-knots}).

\subsection{Real and fictitious knots}
\label{sec:real-vs-fictitious}
In KAN-based architectures, a separate univariate spline is applied to each input coordinate.
Consider $\calT$ with $m$ selected knots from $N$ training samples ($m \le N$), used as the knots of a single layer's spline. 
The spline acting on coordinate $j$ therefore takes its knots from the projection of the selected points onto that axis, $\calT_j=\{\bftau_{1,j},\dots,\bftau_{m,j}\} \subset \bbR$, for $j \in \{1, \dots, n\}$.
Because the coordinates are processed independently, the first-layer interpolation-error term $u(x)=\sum_{j=1}^{n}\bar u_j(x_j)$ depends on $x$ only through the $n$ separate proximities of each $x_j$ to the axis knots, never through the joint position of $x$. 
Since each $\bar u_j\ge 0$ vanishes exactly at its knots (Theorem~\ref{thm:poly-interp-bound}), $u$ vanishes precisely on the grid $\calG= \calT_1\times \calT_2\times\cdots\times \calT_n$, where its cardinality is $|\mathcal{G}|=m^{n}$.

For instance, consider $n=2$ and two selected knots $\bftau_1=(-1,0)$ and $\bftau_2=(1,2)$. 
Their per-axis projections are $\calT_1=\{-1,1\}$ and $\calT_2=\{0,2\}$, so the grid is $\calG=\calT_1\times \calT_2=\{(-1,0),\,(-1,2),\,(1,0),\,(1,2)\}$, with $|\calG|=2^{2}=4$ vertices, even though only the two points $\bftau_1,\bftau_2$ were ever selected. 
Of these four grid vertices, $\calR=\{(-1,0),(1,2)\}$ are \emph{real knots}, while $\calF=\{(-1,2),(1,0)\}$ are \emph{fictitious knots}, arising only from recombining coordinates across axes. Fig.~\ref{fig:contour} shows a $5 \times 5$ grid produced by $m=5$ knots in $n=2$ dimensions.

\begin{definition}[Real and fictitious knots]
\label{def:real-vs-fictitious}
A grid vertex $g\in\mathcal{G}$ is a \textit{real knot} if it is one of the selected knot points, $g\in\calR:=\calT$, and a \textit{fictitious knot} otherwise, $g\in\calF:=\calG\setminus\calR$. 
\end{definition}

Real knots are the points the knot selection intended to mark; fictitious knots are the spurious coincidences created by recombining the per-axis projections.
Each selected point contributes its projection to every $\calT_j$, so $\calT\subseteq\calG$ and $|\calR|=m$, while $|\calG|=m^{n}$. 
The fraction of grid vertices that are real is therefore $\frac{|\calR|}{|\calG|}=\frac{m}{m^{n}}\xrightarrow[n\to\infty]{}0\,\,\,(m\ge 2)$, so in high dimensions the apparent knot grid is almost entirely fictitious.

The interpolation-error term cannot distinguish the two cases: $\bar u_j$ vanishes at every axis knot, so $u(g)=0$ for every $g\in\calG$, real or fictitious. 
The distance to the data $d(x,\calX_D)=\min_{\tau\in\calR}\lVert x-\tau\rVert$ vanishes only on $\calR$.
A fictitious knot can lie arbitrarily far from every training input while still receiving zero interpolation error. 
Consequently, the theoretical distance-awareness guarantee can weaken in high-dimensional settings, where the bound may decrease to near-zero uncertainty at fictitious knots even though such locations can be far from the training data.
We call this the \textbf{fictitious-knot problem}. 
Note that this is a theoretical weakening of the guarantee, not necessarily a large practical effect: our experiment with DAREK still observes SDA around 84-87\% (Sec.~\ref{sec:experiments}, Table~\ref{tbl:drainage-uncertainty}) even without correction; the drainage method closes this remaining gap and formally restores the guarantee.

Equivalently, the per-coordinate ``nearest knot'' of Theorem~\ref{thm:poly-interp-bound} is not a point but the hyperplane $\{x:x_j=\tau\}$; the $n$ such hyperplanes intersect in the grid $\calG$, and the bottom-up bound treats a query as close to the data whenever it is near one hyperplane in \textit{each} coordinate at once, irrespective of its joint location.

\begin{figure}[t]
  \centering
  \includegraphics[width=\columnwidth]{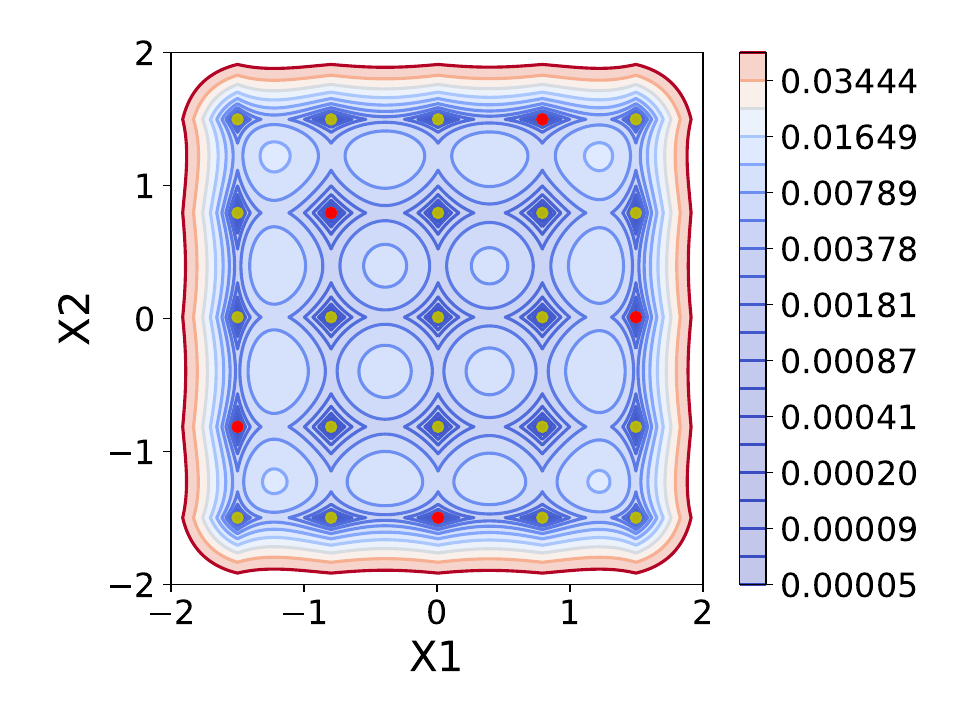}
  \vspace{-7mm}
  \caption{Interpolation-error bound $\bar u$ of \eqref{eq:int-error}. 
  \red{Red circles ($\bullet$)} are real knots $\calR$ and \olive{olive circles ($\bullet$)} are fictitious knots $\calF$.
  Near-zero uncertainty appears at every vertex of the $5\times5$  grid, not just at the five real knots: $\bar u$ vanishes across all of $\calG$, so fictitious vertices receive near-zero uncertainty.} 
  \label{fig:contour}
\end{figure}

\subsection{The violation at fictitious knots}
\label{sec:violation-fic-knots}
The collapse identified in Sec.~\ref{sec:real-vs-fictitious} ($u\equiv 0$ on the whole grid $\calG$) is not merely a loss of tightness; 
it creates regions where uncertainty can decrease while distance from the data increases at fictitious knots, as stated in Lemma~\ref{lem:fictitious-violation}. 

\begin{lemma}[Fictitious-knot violation]
\label{lem:fictitious-violation}
Let $u(x)=\sum_{j=1}^{n}\bar u_j(x_j)$ be the first-layer interpolation-error bound, and let $g\in\calF$ be a fictitious knot whose nearest real knot is $\tau^\ast$, so $d(g,\calX_D)=\lVert g-\tau^\ast\rVert>0$. 
Then on the approach to $g$ along the segment from $\tau^\ast$, the uncertainty $u$ strictly decreases while $d(\cdot,\calX_D)$ strictly increases, so $[\nabla_x u(x)]^{\top}\nabla_x d(x,\mathcal{X}_D)<0$ on a neighborhood of $g$, violating~\eqref{eq:dist-aware-cond}. 
Hence, the DAREK bound is distance-aware component-wise, but the propagated bound in the full input space for $n>1$ is not.
\end{lemma}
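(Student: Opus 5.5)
The plan is to work along the segment $\gamma(t)=\tau^\ast+t\,(g-\tau^\ast)$, $t\in[0,1]$, and study the scalar functions $U(t)=u(\gamma(t))$ and $D(t)=d(\gamma(t),\calX_D)$ for $t$ close to $1$. The argument splits into three steps: a distance step, an uncertainty step, and a step converting the two monotonicity facts into the gradient inequality~\eqref{eq:dist-aware-cond}.

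\textbf{Distance step.} Since $\tau^\ast$ is a nearest real knot to $g$, I would first show that $\tau^\ast$ remains a nearest real knot to every point $\gamma(t)$. For any $\tau\in\calR$, the triangle inequality gives $\lVert g-\tau\rVert\le\lVert g-\gamma(t)\rVert+\lVert\gamma(t)-\tau\rVert$. Combining this with $\lVert g-\tau\rVert\ge\lVert g-\tau^\ast\rVert=\lVert g-\gamma(t)\rVert+\lVert\gamma(t)-\tau^\ast\rVert$ yields $\lVert\gamma(t)-\tau\rVert\ge\lVert\gamma(t)-\tau^\ast\rVert$. Hence $D(t)=t\,\lVert g-\tau^\ast\rVert$, which is strictly increasing because $g\ne\tau^\ast$.

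\textbf{Uncertainty step.} Write $U(t)=\sum_j \bar u_j(\tau^\ast_j+t(g_j-\tau^\ast_j))$. Each coordinate $g_j$ is an axis knot in $\calT_j$ and $\bar u_j(g_j)=0$. For every $j$ with $g_j\neq\tau^\ast_j$, Proposition~\ref{prop:1d-da} applies on the knot interval of $\calT_j$ adjacent to $g_j$ on the side of $\tau^\ast_j$. There $\bar u_j$ is strictly monotone toward zero as $x_j\to g_j$: strictly decreasing on $(x^\star,g_j)$ if approached from the left, strictly increasing on $(g_j,x^\star)$ if approached from the right. So for $t\in(1-\epsilon_j,1)$ the $j$-th term strictly decreases in $t$. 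Coordinates with $g_j=\tau^\ast_j$ contribute the constant $\bar u_j(g_j)=0$. Because $g\ne\tau^\ast$, at least one coordinate moves. Taking $\epsilon=\min_j\epsilon_j$, $U$ is strictly decreasing on $(1-\epsilon,1)$ while $D$ is strictly increasing, which is the monotone statement of the lemma.

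\textbf{Gradient step.} This is the main obstacle, because the strict inequality on a neighborhood of $g$ needs differentiability and a nonvanishing derivative. At $\gamma(t)$ for $t<1$ near $1$, no moving coordinate sits on a knot, so $d$ is differentiable there (away from ties, which the segment avoids by the distance step). Its gradient is $\nabla_x d=(\gamma(t)-\tau^\ast)/\lVert\gamma(t)-\tau^\ast\rVert$, a positive multiple of $g-\tau^\ast$. Hence
\[
[\nabla_x u]^\top\nabla_x d=\frac{U'(t)}{\lVert g-\tau^\ast\rVert}\cdot\frac{1}{1}\quad\text{up to a positive factor}.
\]
For $U'(t)<0$ strictly, not merely $U$ decreasing, I would use the explicit form~\eqref{eq:int-error}. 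Near a knot, $\bar u_j(x_j)$ is proportional to $|x_j-g_j|$ times a nonzero factor, so $\bar u_j'\neq0$ near $g_j$ off the knot.

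To extend from the segment to a neighborhood, I would use continuity of $\nabla u$ and $\nabla d$ on the open region near $g$ excluding the knot hyperplanes. The claim should then be read on the punctured neighborhood intersected with the cone of points whose nearest real knot is $\tau^\ast$. This is the point where I would be careful to qualify the statement, since $u$ is not differentiable at $g$ itself. The final sentence of the lemma then follows, because Proposition~\ref{prop:1d-da} gives componentwise distance-awareness while the above exhibits a violation whenever $\calF\neq\emptyset$, which occurs for $n>1$ and $m\ge2$ with distinct projections.
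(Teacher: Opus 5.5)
Your route is the same as the paper's. You restrict to the segment from $\tau^\ast$ to $g$, show $d(\cdot,\calX_D)=t\lVert g-\tau^\ast\rVert$ there (triangle inequality instead of the paper's convexity of $\calV(\tau^\ast)$), show $u$ falls to $0$ at $g$, and conclude that the two gradients are opposed. Your uncertainty step is sounder than the paper's. The paper deduces strict decrease from ``$u>0$ on the open segment, so $g$ is an isolated minimizer'', and that positivity can fail when the segment crosses another grid vertex. For example, take real knots $(0,0),(1,5),(5,1),(2,7),(7,2)$ and $g=(2,2)$; then $\tau^\ast=(0,0)$ and $(1,1)\in\calG$ lies on the segment. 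Your per-coordinate use of Proposition~\ref{prop:1d-da} and the simple-root argument are purely local at $g$ and avoid this. Note that uniqueness of the nearest real knot on the open segment needs the equality case of the triangle inequality, not only the inequality you wrote.

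The step that fails is the extension off the segment, for two reasons.

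\emph{The gradient does not exist on the segment.} If some coordinate is fixed, $g_j=\tau^\ast_j$, the whole segment lies in the kink hyperplane $\{x_j=g_j\}$ of $\bar u_j$. This is the typical case when $\tau^\ast$ is one of the real knots whose projections generate $g$. Then $\nabla_x u$ exists nowhere on the segment. Your identity with $U'(t)/\lVert g-\tau^\ast\rVert$ is only a directional-derivative statement, and there is no continuity of $\nabla_x u$ from the segment to propagate.

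\emph{The Voronoi cell is the wrong restriction.} Suppose $g$ is interior to $\calV(\tau^\ast)$. Points just beyond $g$, with $(x_j-g_j)(g_j-\tau^\ast_j)>0$ for every moving $j$, also lie in the cell. There every moving $\bar u_j$ increases along $\nabla_x d$, so $[\nabla_x u]^\top\nabla_x d>0$. Hence the lemma's ``neighborhood of $g$'' is too strong; the paper's proof also argues only along the segment and does not justify it.

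What you can prove is a one-sided version. Near $g_j$, write $\bar u_j(x_j)=c_j|x_j-g_j|+O(|x_j-g_j|^2)$ with a side-dependent $c_j>0$. Consider points $x$ in the interior of $\calV(\tau^\ast)$ with $0<\lVert x-g\rVert<\rho$, $(x_j-g_j)(\tau^\ast_j-g_j)>0$ when $g_j\neq\tau^\ast_j$, and $x_j\neq g_j$ otherwise. On this set:
\begin{itemize}
\item each moving coordinate contributes about $-c_j|g_j-\tau^\ast_j|/\lVert g-\tau^\ast\rVert$, which is bounded away from $0$;
\item each fixed coordinate contributes $c_j|x_j-g_j|/\lVert x-\tau^\ast\rVert+O(|x_j-g_j|^2)$, which tends to $0$.
\end{itemize}
So the inner product is strictly negative for small $\rho$. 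This suffices for the violation and for the lemma's final sentence.
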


\begin{proof}
Since $g$ has nearest real knot $\tau^\ast$ and Voronoi cells are convex polytopes, the segment $\{x(t)=\tau^\ast+t(g-\tau^\ast):t\in[0,1]\}$ lies in $\calV(\tau^\ast)$, where $d(x(t),\calX_D)=\lVert x(t)-\tau^\ast\rVert=t\,\lVert g-\tau^\ast\rVert$ increases in $t$. 
By Definition~\ref{def:real-vs-fictitious}, both endpoints lie on $\calG$, so $u(\tau^\ast)=u(g)=0$, while $u>0$ on the open segment because the product from~\eqref{eq:int-error} is strictly positive away from knots. Thus $g$ is an isolated minimizer of $u$ and $u$ is strictly decreasing as $x(t)\to g$. Therefore $\nabla_x u$ points back toward $\tau^\ast$ (the direction of increasing $u$), whereas $\nabla_x d(\cdot,\calX_D)=(x-\tau^\ast)/\lVert x-\tau^\ast\rVert$ points away from $\tau^\ast$; the two gradients are opposed and their inner product is negative.
\end{proof}

Composition across layers is expected to propagate rather than repair the violation, since each layer adds the previous layers' summed error; we do not analyze deeper-layer composition separately (see Limitations). 
The violation stays confined to the geometrically small fictitious-knot regions characterized in Sec.~\ref{sec:real-vs-fictitious}; DAREK's overall SDA remains above chance because those regions are a small fraction of the input space. We correct the estimate directly in Sec.~\ref{sec:drainage-estimator}.

\section{Methodology}
With the problem defined in Sec.~\ref{sec:problem}, we now introduce a drainage-based correction, prove that it restores distance-awareness under an explicit condition on its gain (Theorem~\ref{thm:restored-da}), and bound the fraction of the input space the correction modifies (Corollary~\ref{col:volume-ratio}).

\subsection{Drainage Estimator}
\label{sec:drainage-estimator}
We reshape the uncertainty so that a monotone path to the data is restored, without retraining the network or altering its knots. 
This is achieved by introducing a linear ``drainage'' path, which smoothly redirects uncertainty toward the nearest real knot.
Let $\tau^\ast(x)$ be the nearest real knot to $x$, $\tau^{+}$ the nearest fictitious knot to $\tau^\ast$, and $\Delta=\lVert\tau^\ast-\tau^{+}\rVert$ the spacing from a real knot to its nearest grid neighbor; we first define the drainage term abstractly.

\begin{definition}[Drainage uncertainty]
    A nonnegative function $u_d:\calX \to\bbR^+$ is a \textbf{drainage uncertainty} for the real knots $\calR$ if it is expressed through the inducing distance $d(x,\calX_D)$ and satisfies 
    \begin{align}
    \text{(D1)}\, &u_d(\tau)=0\ \ \forall\tau\in\mathcal{R}, \\ \text{(D2)}\, &u_d\ \text{is strictly increasing in }d(x,\mathcal{X}_D).
    \end{align}
\end{definition}
Property (D2) makes $u_d$ distance-aware by construction: writing $u_d(x)=\psi\big(d(x,\mathcal{X}_D)\big)$ with $\psi'>0$ gives $[\nabla_x u_d]^{\top}\nabla_x d = \psi'\lVert\nabla_x d\rVert^2\ge 0$. 
Two natural choices are the linear ramp and a saturating (RBF-type) form,
$u_d^{\mathrm{lin}}(x)=\eta\,\frac{d(x,\mathcal{X}_D)}{\Delta}$, $u_d^{\mathrm{rbf}}(x)=\eta\Big(1-e^{-d(x,\mathcal{X}_D)^2/2\ell^{2}}\Big)$, where $\Delta=\lVert\tau^\ast-\tau^{+}\rVert$ is the spacing from a real knot to its nearest grid neighbor and $\eta$,$\ell>0$. 
The linear form rises at a constant rate set by a single gain $\eta$; the saturating form connects to the kernels of distance-aware Gaussian processes, trading the gain for a length-scale $\ell$.

\begin{definition}[Drainage-corrected estimator] 
Given a drainage uncertainty $u_d$, the drainage-corrected estimator $u_{\text{da}}$ is
\begin{align}
    \label{eq:combined-error}
    u_{\text{da}}(x) =
    \begin{cases}
        u(x), & |x_j-\tau^\ast_j(x)|<\epsilon\ \ \forall j,\\[2pt]
        \max\{\,u(x),\,u_d(x)\,\}, & \text{otherwise.}
    \end{cases}
\end{align}
\end{definition}
The first case $|x_j-\tau^\ast_j(x)|<\epsilon\ \ \forall j$ corresponds to the genuine box around the nearest real knot. Inside this box, the principled DAREK bound is used unchanged; elsewhere, the estimate may be lifted by drainage. 
By construction, $u_{\mathrm{da}}\ge u$ everywhere, so the worst-case enclosure $y\in[\hat f\pm u_{\mathrm{da}}]$ is preserved (drainage never reports less than DAREK), and $u_d\to 0$ as $x\to\tau^\ast$, so the lift vanishes at the data and the correction is not over-conservative near training points.

\begin{remark}
The combination uses $\max$ rather than replacing $u$: a query may satisfy the partial-proximity condition yet have small $u_d$, and replacing $u$ there could report less than the original DAREK  bound.
Taking the maximum keeps the larger of the DAREK bound and the distance-aware ramp.
\end{remark}

In the remainder, we instantiate $u_d=u_d^{\mathrm{lin}}$, chosen for its single interpretable gain.
Theorem~\ref{thm:restored-da} below gives an explicit sufficient threshold condition on this gain for restoring distance-awareness. 

\subsection{Restored distance-awareness}
Drainage restores the distance-awareness condition~\eqref{eq:dist-aware-cond} when its gain is sufficiently large.
To state this condition, within a Voronoi cell $\calV(\tau^\ast)$, let $U_{\max}=\sup_{y\in\mathcal{V}(\tau^\ast)}u(y)$ denote the largest value of the DAREK bound in that cell.

\begin{theorem}[Restored distance-awareness]
\label{thm:restored-da}
Suppose
\begin{align}
&\text{(C1)}\quad \epsilon<\tfrac12\min_j(\text{adjacent-knot gap on axis }j),
\qquad\\
&\text{(C2)}\quad \eta\ge \frac{\Delta\,U_{\max}}{\epsilon}.
\end{align}

Then for every $x\in\mathcal{V}(\tau^\ast)$ the segment from $x$ to $\tau^\ast$ is a path along which $u_{\mathrm{da}}$ is monotonically non-increasing, ending at $u_{\mathrm{da}}(\tau^\ast)=u(\tau^\ast)$. 
Consequently $u_{\mathrm{da}}$ satisfies the distance-awareness condition~\eqref{eq:dist-aware-cond} on $\calV(\tau^\ast)$, and hence on all of $\calX$.
\end{theorem}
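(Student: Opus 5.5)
We parametrize the segment from $\tau^\ast$ to a fixed $x\in\mathcal{V}(\tau^\ast)$ as $x(t)=\tau^\ast+t(x-\tau^\ast)$, $t\in[0,1]$. By convexity of the Voronoi cell (as in the proof of Lemma~\ref{lem:fictitious-violation}) the whole segment stays in $\mathcal{V}(\tau^\ast)$. Hence $\tau^\ast(x(t))=\tau^\ast$ and $d(x(t),\calX_D)=t\lVert x-\tau^\ast\rVert$ along it. It follows that $u_d^{\mathrm{lin}}(x(t))=\eta t\lVert x-\tau^\ast\rVert/\Delta$ is strictly increasing in $t$, and it vanishes at $t=0$ by (D1). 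Showing $u_{\mathrm{da}}$ is non-increasing as we move from $x$ to $\tau^\ast$ is the same as showing $t\mapsto u_{\mathrm{da}}(x(t))$ is non-decreasing. We split the segment according to the two cases of \eqref{eq:combined-error}. Since the box $B_\epsilon=\{y:|y_j-\tau^\ast_j|<\epsilon\ \forall j\}$ is convex and contains $\tau^\ast$, the set of $t$ with $x(t)\in B_\epsilon$ is an initial interval $[0,t_\epsilon)$.

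\textbf{Inside the box.} For $t<t_\epsilon$ we have $u_{\mathrm{da}}=u=\sum_j\bar u_j(x_j(t))$. By (C1) each coordinate $x_j(t)$ lies within $\epsilon$ of $\tau^\ast_j$, which is less than half the adjacent-knot gap. So $x_j(t)$ stays in an interval adjacent to $\tau^\ast_j$ and moves monotonically away from $\tau^\ast_j$. By Proposition~\ref{prop:1d-da}, $\bar u_j$ is increasing on $(\tau^\ast_j, x^\star)$ and decreasing after $x^\star$, so monotonicity of each summand needs $x^\star$ to lie beyond the half gap. I would argue this for the product \eqref{eq:int-error}. If it fails, I would instead shrink $\epsilon$ below the distance from $\tau^\ast_j$ to the interval maximizer; this is the natural reading of (C1). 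Each $\bar u_j(x_j(t))$ is then non-decreasing in $t$, and so is their sum.

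\textbf{Outside the box, and the junction.} For $t\ge t_\epsilon$ some coordinate satisfies $|x_j(t)-\tau^\ast_j|\ge\epsilon$. This gives $d(x(t),\calX_D)=\lVert x(t)-\tau^\ast\rVert\ge\epsilon$, and (C2) then yields
\begin{equation*}
u_d(x(t))\ \ge\ \frac{\eta\,\epsilon}{\Delta}\ \ge\ U_{\max}\ \ge\ u(x(t)).
\end{equation*}
So on this part $u_{\mathrm{da}}=\max\{u,u_d\}=u_d$, which is strictly increasing in $t$. At the junction, $u_{\mathrm{da}}$ just inside the box equals $u\le U_{\max}$, and just outside it equals $u_d\ge U_{\max}$. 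The function therefore jumps upward (or stays level), and the combined function is non-decreasing on all of $[0,1]$. At $t=0$, $u_{\mathrm{da}}(\tau^\ast)=u(\tau^\ast)$ because $\tau^\ast\in B_\epsilon$.

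\textbf{Consequence.} Monotonicity along every radial segment means that, wherever $u_{\mathrm{da}}$ is differentiable, its directional derivative along $(x-\tau^\ast)/\lVert x-\tau^\ast\rVert=\nabla_x d$ is nonnegative. This is exactly \eqref{eq:dist-aware-cond} on $\mathcal{V}(\tau^\ast)$. The Voronoi cells cover $\calX$, so the condition holds on all of $\calX$; on cell boundaries and at the box boundary it is understood in the one-sided or almost-everywhere sense.

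The main obstacle is the inside-box step. Proposition~\ref{prop:1d-da} only gives monotonicity up to the interval maximizer $x^\star$, and (C1) as stated does not obviously keep $\epsilon$ below $|x^\star-\tau^\ast_j|$. I would try to resolve this first, either by checking symmetry or location properties of the product in \eqref{eq:int-error}, or by tightening (C1).
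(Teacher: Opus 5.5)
Your argument follows the paper's proof step for step. You parametrize the radial segment inside $\calV(\tau^\ast)$, use $u_{\mathrm{da}}=u$ in the $\epsilon$-box, use (C2) to get $u_d\ge\eta\epsilon/\Delta\ge U_{\max}\ge u$ outside it, and then globalize cell by cell. Your bookkeeping is tighter than the paper's. The paper only speaks of $s<\epsilon$ inside the box and leaves the jump at the box boundary implicit. Your exit time $t_\epsilon$, obtained from convexity of the box, covers the box corners where $\epsilon\le s<\sqrt n\,\epsilon$, and your inequality $u\le U_{\max}\le u_d$ settles the upward jump. The step you flag is exactly where the paper's proof is thin: it simply asserts that inside the box $u$ ``rises from $u(\tau^\ast)$ as a single isolated knot (non-decreasing in $s$)'', with no argument.

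Your suspicion is correct, and the symmetry/location route cannot close the step, because under (C1) as stated the claim is false for cubic splines.
\begin{itemize}
\item Take $n=2$, $k=3$ (the paper's setting) and the four real knots $(0,0),(1,10),(2,20),(3,30)$. Then $\calT_1=\{0,1,2,3\}$, and (C1) reads $\epsilon<1/2$.
\item Fix $\tau^\ast=(0,0)$ and $x(t)=(t,0)$ with $0\le t<1/2$. These points lie in $\calV(\tau^\ast)$, and for $t<\epsilon$ they lie in the box.
\item Along this path $u(x(t))=\bar u_1(t)\propto t(1-t)(2-t)(3-t)=1-(w+1)^2$, where $w=t^2-3t$. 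This peaks at $t=(3-\sqrt5)/2\approx 0.382$.
\item Hence for any $\epsilon\in(0.382,0.5)$, $u_{\mathrm{da}}=u$ strictly decreases on $((3-\sqrt5)/2,\epsilon)$ while $d$ increases. This violates \eqref{eq:dist-aware-cond} however large $\eta$ is, because drainage is switched off inside the box.
\end{itemize}

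So the defect lies in the hypothesis, not in your method. Your second option is the right fix, but state it as a strictly stronger assumption rather than as a ``natural reading'' of (C1). For example, require (C1$'$): $\epsilon<|x^\star-\tau|$ for every axis knot $\tau\in\calT_j$ and every maximizer $x^\star$ (from Proposition~\ref{prop:1d-da}) of an interval adjacent to $\tau$. This bound is positive and implies (C1). It coincides with (C1) when $k=1$, where the maximizer is the midpoint. Under (C1$'$), each $\bar u_j(x_j(t))$ is non-decreasing inside the box, and the rest of your proof goes through unchanged.
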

\begin{proof}
Parametrize the segment by $s=\lVert y-\tau^\ast\rVert$, so $d(y,\mathcal{X}_D)=s$ and $u_d(y)=(\eta/\Delta)s$ is linear and increasing. 
By (C1) the only grid knot in the $\epsilon$-box is $\tau^\ast$, so for $s<\epsilon$ the genuine box gives $u_{\mathrm{da}}=u$, which rises from $u(\tau^\ast)$ as a single isolated knot (non-decreasing in $s$). 
Leaving the box forces some coordinate to differ from $\tau^\ast$ by at least $\epsilon$, so $s\ge\epsilon$; then by (C2), $u_d(y)=(\eta/\Delta)s\ge(\eta/\Delta)\epsilon\ge U_{\max}\ge u(y)$, hence $u_{\mathrm{da}}=\max\{u,u_d\}=u_d=(\eta/\Delta)s$, again increasing in $s$. 
Thus $u_{\mathrm{da}}$ is non-decreasing in $s$, i.e. non-increasing toward $\tau^\ast$; since $d(\cdot,\mathcal{X}_D)=s$, \eqref{eq:dist-aware-cond} holds. 
Repeating per cell gives the global claim. \end{proof}

\begin{remark}[The gain is conservative]
Condition (C2) bounds drainage against the worst case $U_{\max}$ over the whole cell and is sufficient, not necessary; in practice, much smaller gains restore distance-awareness, as the ablation of Sec.~\ref{sec:experiments} confirms. 
A single global $\Delta=\min_{\tau\in\calR} \|\tau - \tau^{+}(\tau)\|$, the minimum grid spacing over all real knots, may be used in place of the per-knot $\Delta$ to make $u_d$ continuous across Voronoi boundaries, at the cost of potentially more conservative drainage values in cells with larger per-knot spacing.
\end{remark}

\begin{remark}[Computation]
Drainage adds negligible cost to a DAREK forward pass. 
Per real knot, $\tau^{+}$ and $\Delta$ are precomputed once, and 
each query costs $\calO(n)$ beyond the nearest-knot lookup. The correction requires no additional model evaluations.
\end{remark}

\subsection{Volume of Drainage Region}
Although the drainage uncertainty restores distance awareness, the drainage term is not itself derived as a worst-case error bound and may increase the combined bound in the region where it applies. 
We therefore want the drainage region to be small relative to the rest of the input space.
In this section, we quantify this proportion as a function of the thickness $\epsilon$, the number of knots $m$, and the input dimension $n$.

\begin{corollary}[Ratio of Drainage to Input Volume]
\label{col:volume-ratio}
Let each input dimension span an interval of length $a$, with $m$ knots per dimension, drainage thickness $\epsilon$ around each knot, and non-overlapping neighborhoods of radius $\epsilon$ around the knots.
The drainage ratio per dimension becomes $r = 2m\epsilon/a \le 1$. Let the drainage region be the set of points for which at least one coordinate lies within $\epsilon$ of a knot on its corresponding axis; then the fraction of the input volume occupied by the drainage region is $\Delta S \;=\; 1-(1-r)^{n}$.
\end{corollary}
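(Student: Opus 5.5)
The plan is to compute the volume of the complement of the drainage region and subtract from one. The input space is the box $[0,a]^n$ (after translation), of volume $a^n$. By the definition in Corollary~\ref{col:volume-ratio}, a point lies in the drainage region exactly when \emph{at least one} coordinate $x_j$ is within $\epsilon$ of some knot on axis $j$. Its complement is therefore the set of points for which \emph{every} coordinate avoids all $\epsilon$-neighborhoods on its axis. This set is a Cartesian product of one-dimensional sets, which is what makes the computation tractable.

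First, I will handle one axis. On axis $j$ the $m$ knots carry the neighborhoods $(\tau_{i,j}-\epsilon,\tau_{i,j}+\epsilon)$. By the non-overlap hypothesis these are disjoint, and I will assume they lie inside $[0,a]$, so boundary truncation is ignored. Their union then has one-dimensional measure exactly $2m\epsilon$. The fraction of the axis covered is $r=2m\epsilon/a$, and disjointness inside the interval forces $r\le 1$. The uncovered set $A_j\subset[0,a]$ has measure $a(1-r)$.

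Second, I will pass to $n$ dimensions. The complement of the drainage region is $A_1\times\cdots\times A_n$. By Fubini (product measure), its volume is $\prod_j a(1-r)=a^n(1-r)^n$. Dividing by $a^n$ gives the uncovered fraction $(1-r)^n$, and hence the drainage fraction is $\Delta S=1-(1-r)^n$. Equivalently, under a uniform draw the coordinates are independent and each lands in its covered set with probability $r$, so this is a standard complementary-probability argument.

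The main obstacle is not the calculation but pinning down the hypotheses so the per-axis measure is exactly $2m\epsilon$. This requires two things: the neighborhoods must be pairwise disjoint, which follows from the non-overlap assumption and is consistent with (C1) of Theorem~\ref{thm:restored-da}, and they must lie fully inside the interval. If knots sit within $\epsilon$ of the endpoints, the covered measure is at most $2m\epsilon$ and the formula becomes an upper bound, $\Delta S\le 1-(1-r)^n$. I will state that as a remark rather than an equality. I will also note the consequence that for fixed $r>0$, $\Delta S\to1$ as $n\to\infty$, while for small $r$ one has $\Delta S\approx nr$, which quantifies the dependence on $\epsilon$ and $n$.
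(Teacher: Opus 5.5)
Your proposal is correct and follows essentially the same route as the paper: the complement of the drainage region is the Cartesian product of the per-axis uncovered sets, each a fraction $1-r$ of its axis, so it occupies a fraction $(1-r)^{n}$ and $\Delta S = 1-(1-r)^{n}$. You also state explicitly that the $\epsilon$-neighborhoods must lie inside the interval, and note that otherwise only $\Delta S \le 1-(1-r)^{n}$ holds; the paper uses this assumption only implicitly, so spelling it out is a worthwhile clarification.
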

\begin{proof}
A point lies outside the drainage region if and only if, for every coordinate, it lies farther than $\epsilon$ from all $m$ knots on the corresponding axis. 
Along each coordinate, the knot neighborhoods occupy a total length of $2m\epsilon$, leaving a fraction $1-2m\epsilon/a = 1-r$ outside these neighborhoods, and the fraction lying outside the knot neighborhoods in every coordinate is $(1-r)^{n}$.
Therefore, the fraction inside the drainage region is $\Delta S \;=\;1-(1-r)^{n}$. 
\end{proof}

\begin{remark}[A thickness–dimension trade-off]
For any \textbf{fixed} relative thickness $r>0$, $\Delta S = 1-(1-r)^{n}\to 1$ as $n\to\infty$.
Thus, in high dimensions, the union of the knot slabs occupies nearly all of the input domain. 
This product-volume effect is distinct from the combinatorial growth of the $m^n$ fictitious knots. 
To keep the drainage region at a target volume fraction $r_0$, one may choose $r =1-(1-r_0)^{1/n}$, or, equivalently, $\epsilon =a/(2m) [1-(1-r_0)^{1/n}] = \calO\big(a/(m n)\big)$.
However, this reduction in thickness comes at the cost of a larger sufficient gain in (C2), since its lower bound is inversely proportional to $\epsilon$. 
Thus, the choice of $\epsilon$ represents a trade-off between the volume of the drainage region and the gain required to guarantee distance-awareness. 
\end{remark}

\section{Experiments}
\label{sec:experiments}
We evaluate drainage through four studies: a 2D synthetic benchmark that visualizes the fictitious-knot failure mode and its correction; a 100-dimensional face bounding-box task that tests the method in a high-dimensional setting; an ablation over the drainage thickness $\epsilon$, gain $\eta$, and knot count $m$; and a computational-cost comparison against DAREK, ensembles, and Gaussian processes. 
The uncertainty quality is reported as the sampled distance-awareness (SDA) of
Definition~\ref{def:sampled-distance-awareness}.

\subsection{2D synthetic benchmark}
\label{sec:exp-2d}
 
\textbf{Setup:}
The input domain is $\calX=[-2,2]^2$ and the target is the smooth multimodal function $f(x_1,x_2)=\sin(x_1)\cos(x_2)$. 
We draw $N=1000$ training points uniformly and select
$m=5$ of them as spline knots (real knots). 
Each of these knots contributes to each coordinate. Because DAREK analyzes the interpolation error separately along each coordinate, the five values on the first axis combine with the five values on the second axis, producing a grid of $5 \times 5 = 25$ apparent knots (Definition~\ref{def:real-vs-fictitious}). Five of these knots preserve the original coordinate pairings and are real knots; the remaining 20 combine coordinates from different training points and are fictitious knots. 
DK1 and DK2 are one-layer and two-layer DAREK models, and ENS1 and ENS2 are ensembles of 5 one-layer and two-layer KANs, respectively. 
All models use cubic ($k=3$) B-spline activations and grid size 5. 
The approximator is DK1 unless otherwise specified.
We use the linear ramp $u_d^{\mathrm{lin}}$ with gain $\eta=0.03$ and box half-width $\epsilon=0.05$ so that the conditions in Theorem~\ref{thm:restored-da} hold.
SDA is estimated from 2000 points
drawn uniformly on $\calX$.

\textbf{Uncertainty collapse:}
Fig.~\ref{fig:contour} plots the $\bar u$~\eqref{eq:int-error} with selected contour levels.
Concentric wells appear at \emph{every} vertex of the $5\times5$ grid $\calG$, not only at the five real knots $\calR$ (red).
$\bar u$ vanishes on all of
$\calG$, exactly as Definition~\ref{def:real-vs-fictitious} predicts. 
A query sitting at a fictitious vertex therefore receives near-zero uncertainty despite being far from any training point as formalized in Lemma~\ref{lem:fictitious-violation}.
 
\textbf{Draining uncertainty:} Fig.~\ref{fig:ubar_drainage} shows the three terms side by side. 
The DAREK panel (left) has low values across the whole interior grid, high only at the far corners. 
The drainage panel (middle), $u_d^{\mathrm{lin}}$, is zero at the real knots and rises with distance to the nearest one. 
The combined estimate (right), $u_{\mathrm{da}}=\max\{u,u_d\}$ of \eqref{eq:combined-error}, keeps the principled DAREK bound inside the $\epsilon$-box around each real knot and lifts the fictitious minima everywhere else, so the spurious wells disappear.
 
\textbf{Distance-awareness violations:}
Fig.~\ref{fig:violation} visualizes the distance-awareness violation regions of the interpolation error~\eqref{eq:int-error}, drainage error, and combined error~\eqref{eq:combined-error} for the two different sets of knots.
The small dots are test points and their color shows the uncertainty. 
The red circles indicate selected knots, and the highlights indicate regions where violations occur.
We evaluate the condition~\eqref{eq:dist-aware-cond} on each test point.
These regions concentrate at the fictitious knots, confirming that the failure is of geometric origin rather than a sampling artifact.

\textbf{Monotone path to real knots:}
Fig.~\ref{fig:vfield} overlays streamlines of $-\nabla u_{\mathrm{da}}$
on the combined uncertainty field. Every streamline terminates at a
real knot (the low-uncertainty wells), so from an arbitrary query
following decreasing uncertainty leads to actual training data, which is guaranteed by Theorem~\ref{thm:restored-da}.
 
\textbf{Compared with RBF:} Fig.~\ref{fig:rbf} normalizes the combined estimate and a RBF-kernel GP to $[0,1]$. Both are low at the real knots and rise away from them; drainage reproduces the GP's distance-aware bowl structure, differing mainly in the residual slab pattern left by the per-axis grid. 
 
\textbf{Quantitative result:}
On this benchmark, the SDA of the uncorrected DAREK variants is
$\approx$\,84--87\% (Table~\ref{tbl:drainage-uncertainty}, DK1/DK2), reflecting
the directions near fictitious knots where uncertainty wrongly
decreases outward. 
Adding drainage raises SDA to 99.5--99.8\%, matching the GP 100\% while the other baselines (deep ensembles ENS1/ENS2) stay near random chance~50\%.

\subsection{Face bounding box}
\label{sec:face-dataset}
We evaluate drainage on a high-dimensional face task. The task is to predict the bounding box (coordinates of the face) from an image on the Celebrity Attribute (CelebA) dataset~\cite{liu2015deep}. 
We use $8k$ training and $1k$ test images, and reduce each image to a $100$-dimensional input via PCA feature projection. 
All KANs use cubic ($k=3$) splines with 15 knots.

Because drainage only lifts the uncertainty estimate and never alters the network output, RMSE and IoU remain unchanged and are not the focus of this paper; the comparison focuses on SDA alone. 
As reported in Table~\ref{tbl:drainage-uncertainty} (100D row), DAREK attains
$95.7$--$96.3\%$ SDA, and drainage raises it to $98.4$--$99.9\%$, matching the GP ($99.55\%$) while the ensembles remain near random chance ($\approx 50\%$). 
Drainage adds only $\mathcal{O}(n)$ per query, so it preserves DAREK's order-of-magnitude speed advantage over the cubic-cost GP.
\begin{table}[!ht]
    \centering    
    \caption{Sampled Distance-awareness (SDA) for different models for the 2D and 100D datasets.}
    \label{tbl:drainage-uncertainty}
    \resizebox{0.45\textwidth}{!}{%
    \begin{tabular}{lccccccc} 
        \hline
        \textbf{Model} & {DK1} & {DK2} & ENS1 & ENS2 & GP & DK1 + drainage & DK2 + drainage\\   
        \hline
        2D      & 84.30 & 87.25 & 44.85 & 52.68 & 100.0 & 99.50 & 99.75 \\
        100D    & 96.27 & 95.67 & 52.09 & 50.15 & 99.55 & 99.85 & 98.36 \\
        \hline
    \end{tabular}}
\end{table}

\subsection{Ablation and Computational Cost}
\label{sec:ablation}
\begin{table}[t]
\centering
\caption{Ablation on the drainage parameters. The baseline DAREK
distance-awareness (SDA$_{\text{int.}}$~\eqref{eq:int-error}, no drainage) depends only on $m$;
SDA$_{\text{combined}}$ (\%) reports the full correction of Eq.~\eqref{eq:combined-error} for four
values of the slope $\eta$. The drainage volume ratio $\Delta S$ depends only
on $(m,\epsilon)$ and the test loss only on $m$. Bold marks the smallest
$\Delta S$ that attains SDA $=100\%$.}
\label{tab:drainage_ablation}
\begin{tabular}{cccccccc}
\toprule
 & & & & \multicolumn{4}{c}{SDA$_{\text{combined}}$ (\%) at $\eta=$} \\
\cmidrule(lr){5-8}
$m$ & SDA$_{\text{int.}}$ & $\epsilon$ & $\Delta S$ & $0.01$ & $0.03$ & $0.1$ & $0.3$ \\
\midrule
\multirow{4}{*}{5} & \multirow{4}{*}{89.0}
  & 0.025 & 0.121 & 98.50 & 99.80 & \textbf{100.0} & 100.0 \\
& & 0.05  & 0.234 & 98.80 & 99.90 & 100.0 & 100.0 \\
& & 0.10  & 0.438 & 99.10 & 99.90 & 100.0 & 100.0 \\
& & 0.20  & 0.750 & 99.10 & 99.80 & 99.90 & 99.90 \\
\midrule
\multirow{4}{*}{7} & \multirow{4}{*}{74.3}
  & 0.025 & 0.167 & 95.70 & 99.70 & \textbf{100.0} & 100.0 \\
& & 0.05  & 0.319 & 96.10 & 99.70 & 100.0 & 100.0 \\
& & 0.10  & 0.578 & 96.70 & 99.80 & 100.0 & 100.0 \\
& & 0.20  & 0.910 & 98.30 & 99.90 & 99.90 & 99.90 \\
\midrule
\multirow{4}{*}{9} & \multirow{4}{*}{77.0}
  & 0.025 & 0.212 & \textbf{100.0} & 100.0 & 100.0 & 100.0 \\
& & 0.05  & 0.399 & 100.0 & 100.0 & 100.0 & 100.0 \\
& & 0.10  & 0.698 & 100.0 & 100.0 & 100.0 & 100.0 \\
& & 0.20  & 0.990 & 100.0 & 100.0 & 100.0 & 100.0 \\
\bottomrule
\end{tabular}
\end{table}
In Table~\ref{tab:drainage_ablation}, we study how the drainage parameters, thickness $\epsilon$ and the slope $\eta$, and the knot count $m$ affect SDA, the drainage volume ratio ($\Delta S $), and the SDA of interpolation error in the 2D experiment.
Without drainage, the interpolation error achieves SDA values between 74.3\% and 89.0\%. Introducing the drainage term consistently increases SDA, reaching 100\% for all tested models ($m$=5, 7, 9) under appropriate parameter settings. 
The greatest improvement occurs for $m=7$, where SDA increases from 74.3\% without drainage to 100\% under appropriate drainage parameters.
The results further indicate that increasing $\eta$ improves SDA when violations are present, but the benefit quickly saturates in this experiment when $\eta=0.1$.
Moreover, at an adequate slope ($\eta\geq0.1$), the smallest tested drainage region ($\epsilon=0.025$) suffices to attain 100\% SDA for every model order, corresponding to drainage volume ratios of only 0.121, 0.167, and 0.212 for $m=5,7,9$, respectively. 
These findings suggest that the residual violations are confined to small regions that drainage can target without substantially enlarging the corrected volume.

\begin{figure}
    \centering
    \includegraphics[width=0.48\textwidth]{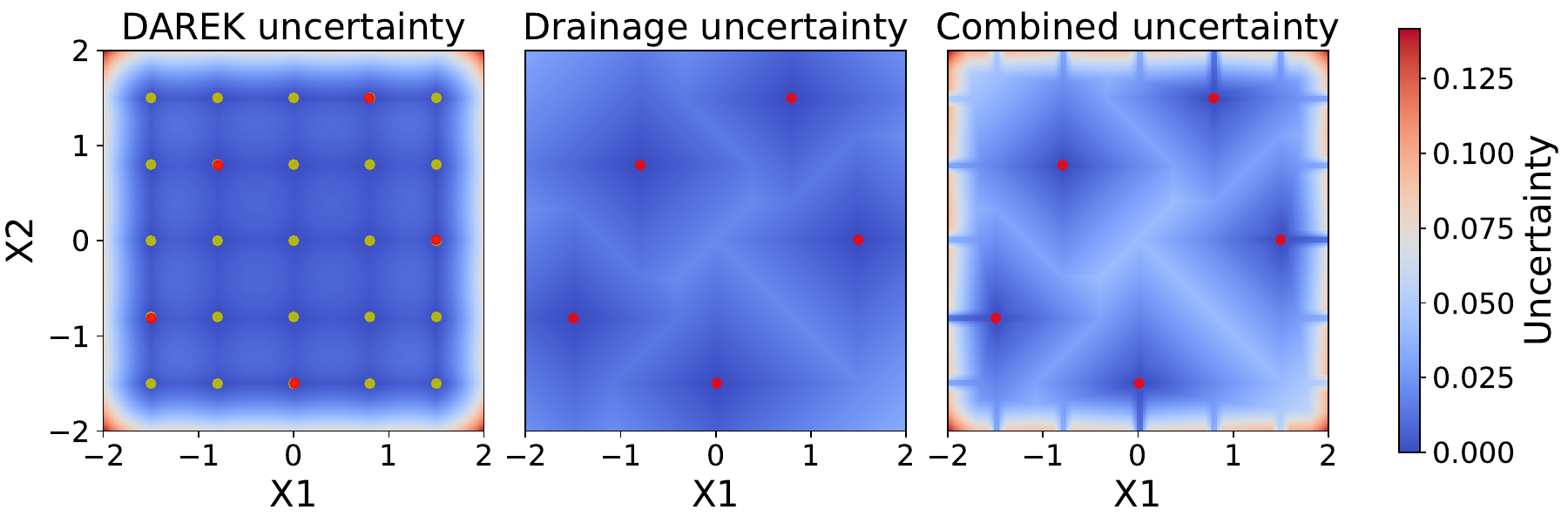}
    \includegraphics[width=0.48\textwidth]{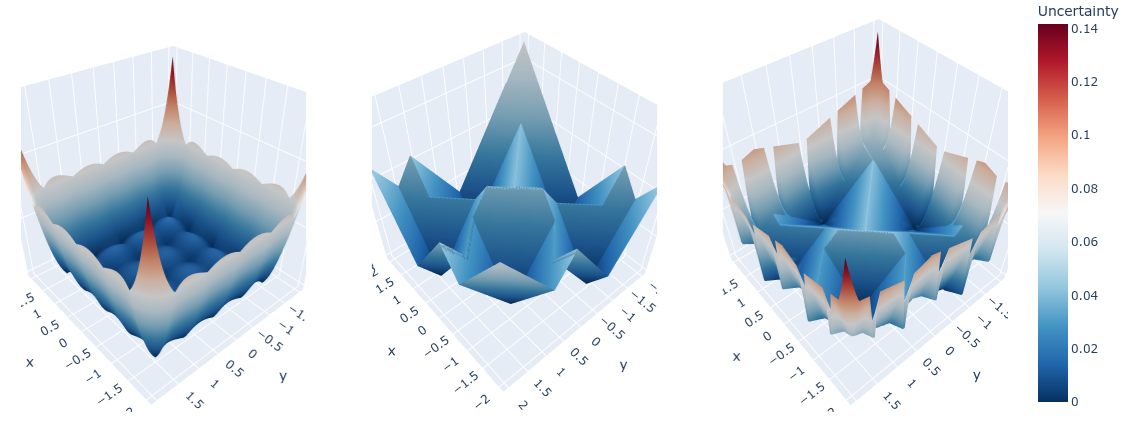}
    \caption{
    DAREK uncertainty.  \textbf{(left)} the original interpolation-error bound~\eqref{eq:int-error}; \textbf{(middle)} the linear drainage uncertainty; and \textbf{(right)} the combined estimate~\eqref{eq:combined-error}.   
    Red and olive markers denote real and fictitious knots, respectively; 
    the second row shows the corresponding 3D surfaces.
    }
    \label{fig:ubar_drainage}
\end{figure}

\begin{figure}[t]
  \centering
  \includegraphics[width=0.48\textwidth,trim=10mm 10mm 30mm 10mm, clip]{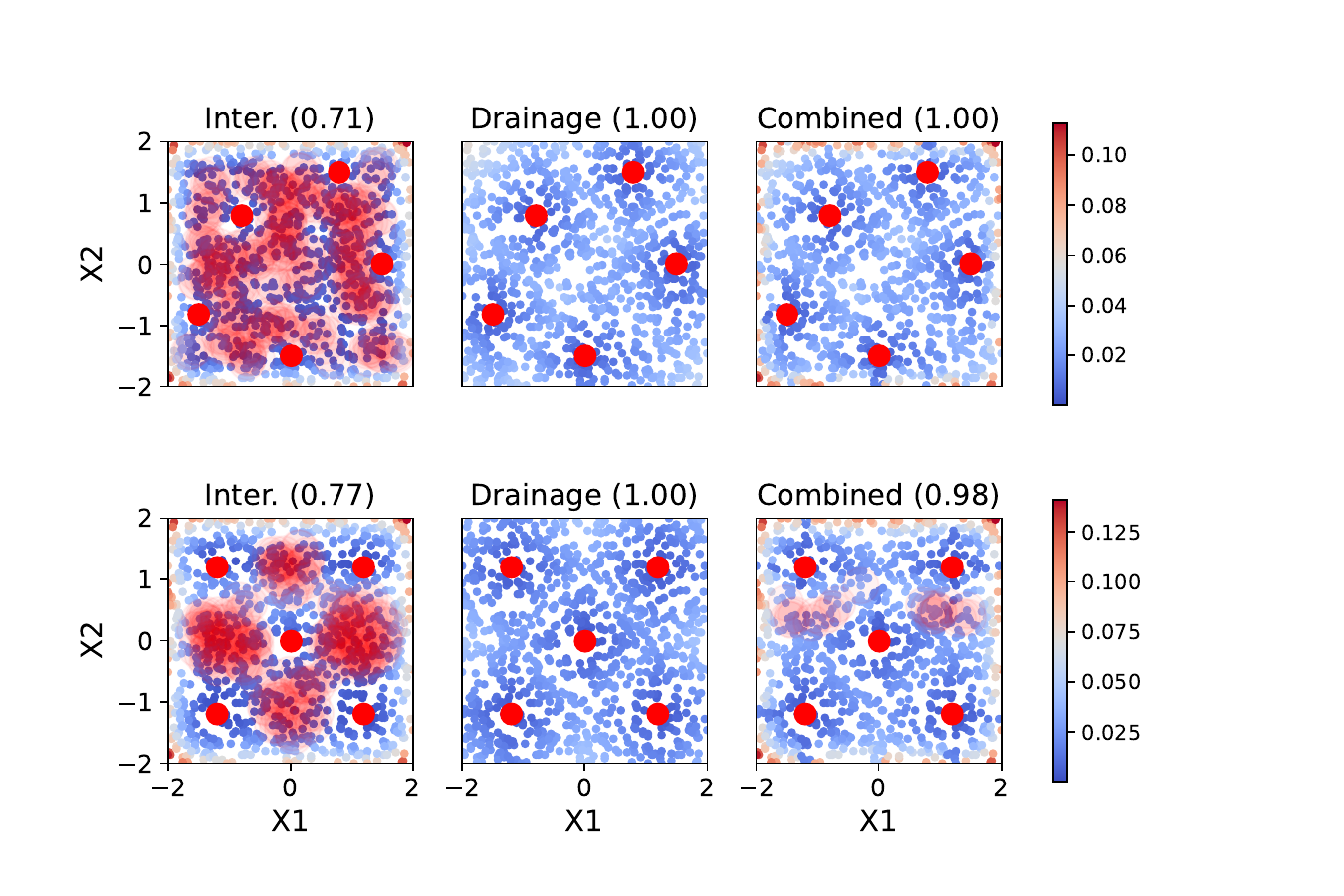}
  \caption{Distance-awareness violations for two knot configurations. 
  Red dots mark the real knots, shaded regions indicate violations, and parentheses report SDA~\eqref{eq:probability_distance_aware}.
  Drainage substantially reduces the violations of the interpolation-error bound.}
  \label{fig:violation}
\end{figure}
 
\begin{figure}[t]
  \centering
  \includegraphics[width=\columnwidth,trim=0 5mm 0 0,clip]{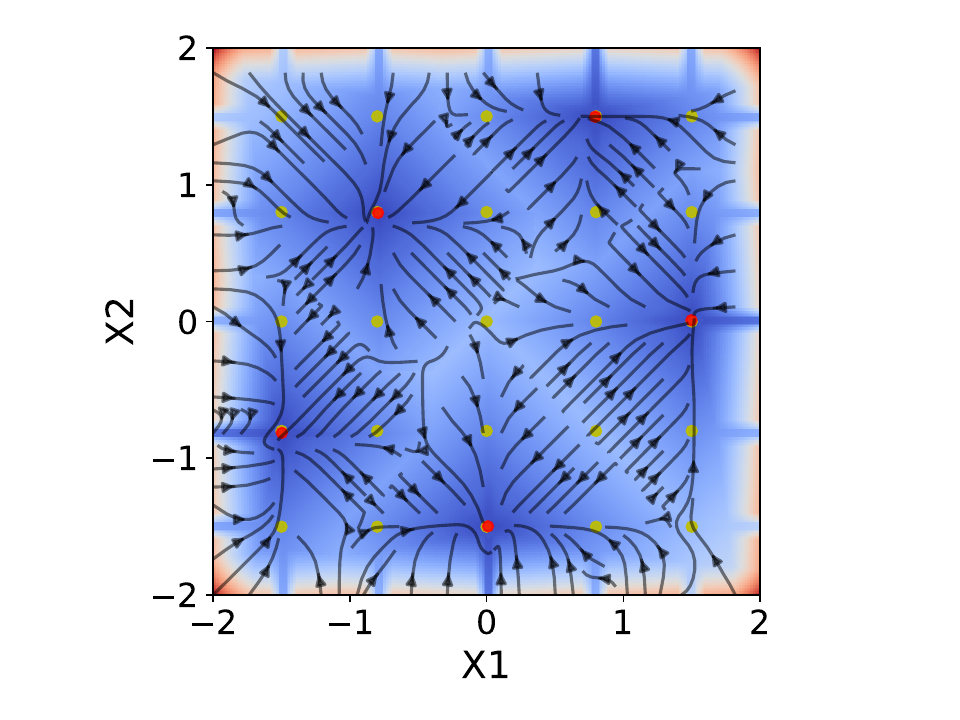}
  \vspace{-5mm}
  \caption{Streamlines of $-\nabla u_{\mathrm{da}}$ over the combined
  uncertainty field. Every streamline descends to a real knot,
  visualizing the monotone path to the training data guaranteed by
  Theorem~\ref{thm:restored-da}.}
  \label{fig:vfield}
\end{figure}
 
\begin{figure}[t]
  \centering
  \includegraphics[width=0.48\textwidth]{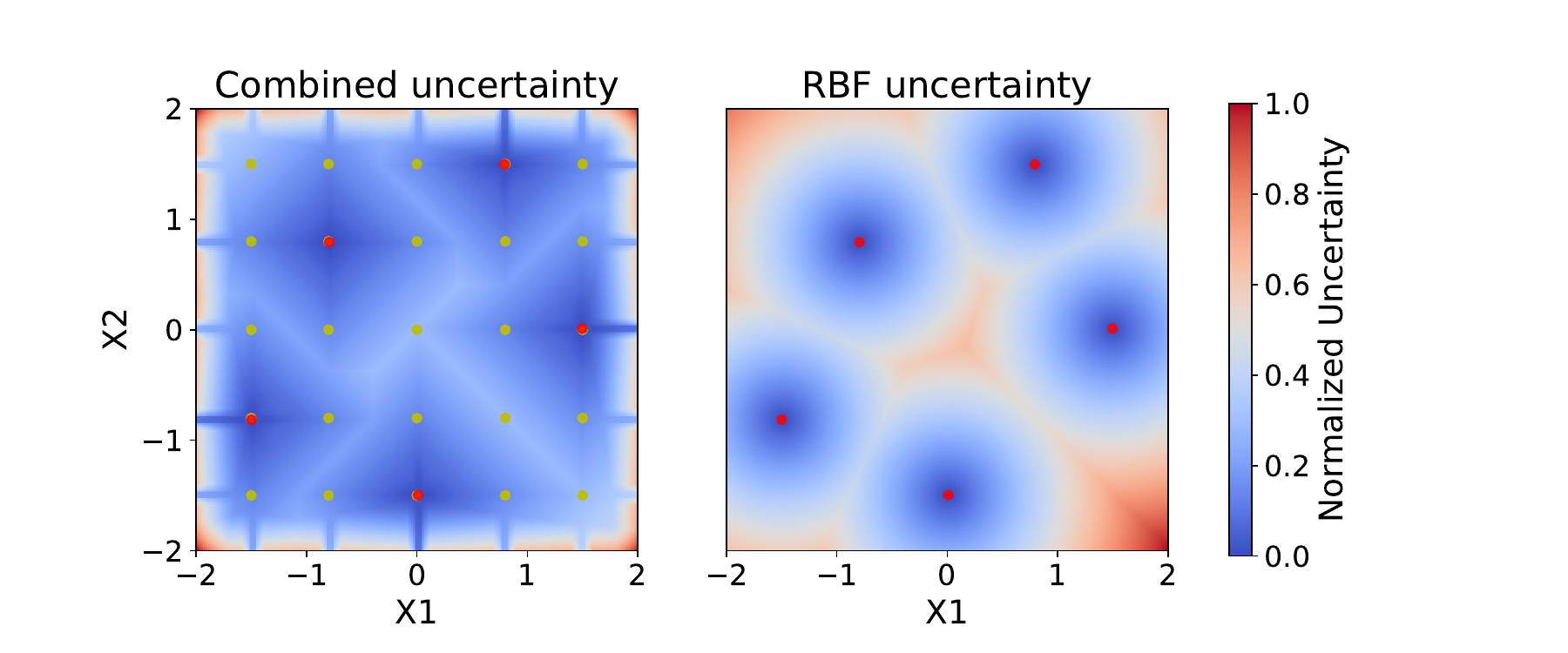}
  \vspace{-3mm}
  \caption{Combined drainage estimate (left) vs.\ a RBF-kernel
  GP (right), each normalized to $[0,1]$. Both are low at the real
  knots and rise away from them; drainage reproduces the GP's
  distance-aware structure without the spurious interior minima of raw
  DAREK.}
  \label{fig:rbf}
\end{figure}

\begin{figure}[t]
  \centering
  \includegraphics[width=0.35\textwidth]{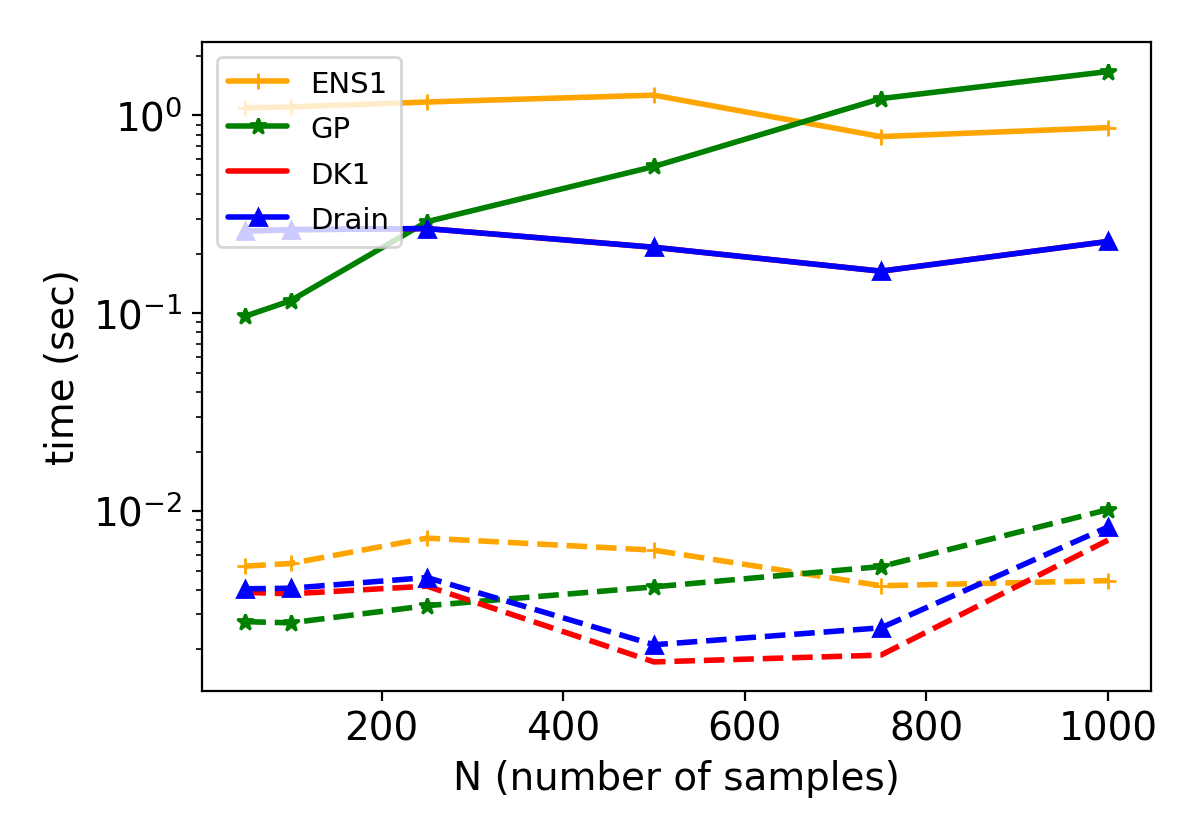}
  \caption {Wall-clock time vs. number of training/inference points $N$ (50–1000) at fixed model capacity ($m=5$). Solid lines show training time and dashed lines show inference time. GP training time grows markedly with $N$, while DAREK, Drain, and the KAN ensemble remain essentially flat.}
  \label{fig:computation-complexity}
\end{figure}

Finally, we compare wall-clock cost across methods as the number of training/inference points $N$ grows from 50 to 1000 (Fig.~\ref{fig:computation-complexity}), with model capacity held fixed at m=5 real knots. DAREK and Drain reuse the same trained model — drainage is a post-hoc correction applied at inference (Eq.~\ref{eq:combined-error}) — so their fit-time curves coincide exactly, confirming drainage adds no training overhead. The GP, trained here on the full N-point set, shows fit time increasing markedly with N, while DAREK, Drain, and the ensemble remain nearly flat since their capacity does not scale with N; at inference, DAREK/Drain's per-query cost stays close to constant, consistent with querying a fixed set of m knots regardless of dataset size.
\section{Conclusion}
We identified and formalized the \textbf{fictitious-knot problem} in high-dimensional spline networks, showing that it arises because a KAN processes each coordinate independently; the knots of its per-axis splines recombine into a grid $\mathcal{G}$ of $m^{n}$ apparent knots, of which only the fraction $m^{1-n}$ are real knots while the remaining points are fictitious knots.
The bottom-up DAREK bound vanishes on this entire grid, so it may report low uncertainty at fictitious knots that lie arbitrarily far from any data, and may therefore violate distance-awareness as the input dimension increases.
To repair this, we introduced \textbf{drainage uncertainty}, a geometric correction that routes the uncertainty along a monotonically decreasing path toward the nearest real knot, restoring a distance-aware estimate while leaving the trained network untouched. 
We showed that the corrected region occupies a fraction $1-(1-r)^{n}$ of the input space, and characterized the trade-off between the drainage thickness and input dimension. 
On a 2-D synthetic benchmark and a 100-dimensional face task, drainage raised sampled distance-awareness from roughly 85\% to 98–99\%, matching a Gaussian process at a fraction of the cost.

\textbf{Limitations.} Drainage restores the geometry of the uncertainty estimate, not a tighter worst-case bound: the drainage term $u_d$ is a linear ramp rather than a worst-case error bound, so while the combined estimate remains a valid enclosure (it never reports less than the underlying DAREK bound), its tightness in the drainage regions is not guaranteed. 
The volume guarantee, moreover, is not free in the limit: for any \textbf{fixed relative thickness} $r>0$, the drained fraction $1-(1-r)^{n}$ tends to one as $n\to\infty$, as a consequence of the product geometry.
Keeping this fraction below a fixed target requires the thickness to shrink with dimension, $\epsilon=\calO(a/(mn))$, which in turn increases the sufficient gain required by (C2).
The restored-monotonicity guarantee depends on the gain $\eta$ exceeding a sufficient threshold that scales with the worst-case DAREK uncertainty on each cell, and is therefore conservative; in practice (Sec.~\ref{sec:ablation}) far smaller gains suffice. 
Finally, our analysis targets the first layer, where the knots are tied directly to the input data, and distance-awareness is measured in input space; composition across deeper layers propagates the corrected estimate but is not separately analyzed here, and the sampled metric (SDA) averages over uniformly drawn test points, which can understate a failure that is concentrated near the fictitious-knot grid.

\textbf{Future work.} Several directions follow naturally. The ramp could be replaced by a Lipschitz-based extension anchored at the real knots, yielding a drainage term that is itself a valid worst-case bound. Pairing drainage with conformal calibration would add distribution-free coverage on top of its geometric monotonicity, combining the per-point distance-awareness that conformal prediction lacks with the marginal guarantees it provides. 
Adapting the thickness $\epsilon$ and gain $\eta$ per region, rather than globally, would tighten the correction further. A further direction is to connect these distance-aware approximation-error bounds with verification methods such as CROWN, which bound the output variation of a fixed learned network over prescribed input regions, to study how the two complementary guarantees can be combined.

\bibliographystyle{IEEEtran}
\bibliography{bib/main}

\end{document}